\documentclass{article}
\usepackage{others/iclr2027_conference,times}
\usepackage{graphicx}
\usepackage{float}
\usepackage{wrapfig}

\usepackage{amsmath,amsfonts,bm}

\def\eqref#1{equation~\ref{#1}}

\def\1{\bm{1}}

\DeclareMathAlphabet{\mathsfit}{\encodingdefault}{\sfdefault}{m}{sl}
\SetMathAlphabet{\mathsfit}{bold}{\encodingdefault}{\sfdefault}{bx}{n}

\usepackage{amsthm}
\usepackage{hyperref}
\usepackage{url}
\usepackage[percent]{overpic}
\usepackage{caption}
\usepackage{setspace}
\usepackage[most]{tcolorbox}

\newtcolorbox{researchquestion}{
    colback=blue!4,
    colframe=blue!35,
    boxrule=0.5pt,
    arc=3pt,
    left=7pt,
    right=7pt,
    top=4pt,
    bottom=4pt
}

\newtheorem{theorem}{Theorem}[section]
\newtheorem{lemma}[theorem]{Lemma}
\newcommand{\spar}[1]{\textbf{#1}.}

\title{Neural Dynamics as the Composition of Quantized Units}

\iclrfinalcopy
\renewcommand{\headrulewidth}{0.4pt}
\author{Jacopo Minniti\textsuperscript{1}\thanks{Work done during an internship at Sakana AI.}\thanks{Correspondence: \texttt{jacopo@uni.minerva.edu}.}
\quad Aravinth Kulanthaivelu\textsuperscript{2}
\quad Richard Sproat\textsuperscript{2}\\[0.7em]
\normalsize\textsuperscript{1}Minerva University
\qquad\textsuperscript{2}Sakana AI}

\begin{document}
\maketitle

\begin{abstract}
Deep learning is commonly interpreted at two levels: the macroscopic, through aggregate trends in loss summarized by scaling laws, and the microscopic, through neurons, features, and circuits. A central challenge is understanding how these levels connect, so that we can explain how elementary computations compose and collectively shape macroscopic behavior. To this end, we study an intermediate abstraction in which training is described as the ordered acquisition of \emph{quanta}: reusable computations acquired suddenly and binary-activated across examples to reduce loss. By approximating population-gradient updates, we derive quanta's acquisition dynamics. This yields an acquisition priority governed by \emph{demand}, how frequently a computation is required across examples, and \emph{conditional complexity}, how difficult that computation is to acquire given those already available. In a Boolean compositional task, we derive predictions for acquisition order and show how staggered discrete acquisitions can produce smooth aggregate loss and, under certain geometries of quanta composition, give rise to scaling laws. We then train a Transformer to map numerals to English number names and recover candidate quanta from its checkpoint trajectory. From these units, we construct a model that preserves much of the Transformer's behavior while exposing interpretable latent computations and acquisition dynamics consistent with the theory. Separately, the quanta structure can serve as training targets to improve transformer generalization. Together, these results suggest the quanta abstraction can provide useful computational atoms for studying a variety of macroscopic phenomena.
\end{abstract}

\section{Introduction}
\label{sec:introduction}

Recent years have brought substantial progress in deep learning theory. Exact solutions for deep linear networks explain learning plateaus and rapid transitions~\citep{saxe2014deep}, while work on multilayer networks shows how parameterization controls the evolution of learned features during training~\citep{yang2021tensor}. Results like these illuminate specific mechanisms, but a theory that connects neural dynamics, representations, and performance across modern architectures has yet to emerge, as analytical tractability continues to depend largely on strong assumptions about linearity, tractable limits, or the data distribution~\citep{simon2026scientific}. Thus, for more complex settings such as large language models (LLMs), a growing body of literature trades analytical precision for a more empirical approach to study learned computations. Mechanistic interpretability has uncovered key insights on the inner workings of LLMs, such as how induction heads are involved in in-context learning~\citep{olsson2022induction} or how sparse features can be used to interpret complex representations~\citep{cunningham2023sae}. Yet, a general theoretical framework connecting these findings is still lacking, as relations among objects such as \textit{features} and \textit{circuits} remain only partially understood, as does how mechanisms established in individual case studies combine into a unified account of model behavior~\citep{gauderis2026compositional}.
\begin{figure*}[t]
    \centering
    \includegraphics[width=0.95\textwidth]{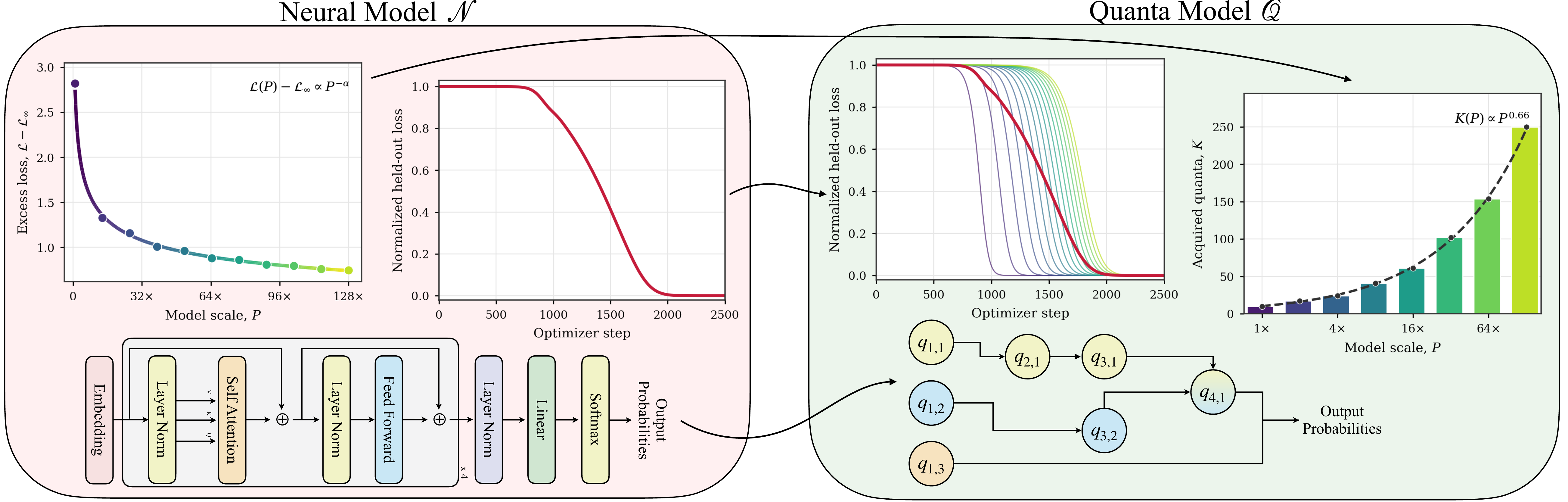}
    \caption{\textbf{Examples of Quanta Descriptions of a Neural Model.} A smooth loss curve can be decomposed into the staggered acquisition of discrete computations. Scaling with model size becomes growth of the acquired frontier, whose unmet demand yields a power law. An opaque neural network can be rewritten as a more interpretable \textit{Q-model} which follows the quanta dynamics.}
    \label{fig:quanta_overview}
\end{figure*}

In parallel, work has begun to synthesize these lines of inquiry into accounts of learning that remain analytically tractable at scale without requiring parameter-specific network descriptions. Doing so requires admittedly strong but productive assumptions about the learning problem. For example, \citet{vanrossem2024universality} reduces sufficiently flexible architectures to idealized encoder--decoder systems, allowing simplified learning dynamics to be derived, while \citet{cagnetta2024hierarchy} assumes a specific hierarchical structure in the data distribution and derives power laws similar to those observed in LLM training. We adopt a similar strategy to ask:

\begin{researchquestion}
\textbf{Research Question.}
How can a model's behavior and learning dynamics be described in terms of reusable computations that are acquired and activated discretely?
\end{researchquestion}

Our choice of discrete units is motivated by a growing body of work in which representations emerge through abrupt transitions even when aggregate loss is smooth~\citep{barak2022hidden,simon2023stepwise} and by the observation that finite-capacity models trained on large datasets benefit from reusing the same learned computation across samples that share a common trait~\citep{olah2020zoom,saxe2022race}. Building directly on the \textit{Quantization Model} proposed by \citet{quanta_model}, we conjecture that much of a neural network's behavior can be approximated by composing reusable computations acquired through sudden events, termed \textit{quanta}. We then approximate gradient descent by dynamics over this space: a quantum tends to be acquired earlier when it is useful to more examples---a property we call its \emph{demand}---and when it is simpler given the currently available computations---its \emph{conditional complexity}. Quanta, their input-dependent execution, and their composition form a \emph{Q-model} that attempts to approximate the source neural model.

\spar{Contributions}
We make three contributions. \textbf{(i)} We formalize functional quantization and derive how demand and conditional complexity order the acquisition of reusable computations. \textbf{(ii)} We connect discrete acquisitions to smooth aggregate loss in a toy Boolean task, analyze how data choices select different posets, and derive scaling laws from their geometry. \textbf{(iii)} Using the \textit{Number Naming} dataset, we introduce a trajectory-based method for recovering Q-models from ordinary neural networks and explore program-derived Q-models as structural priors and removable training targets. Code is available at \href{https://github.com/SakanaAI/quanta}{\texttt{github.com/SakanaAI/quanta}}.

\section{Related Work}
\label{sec:related_work}

\spar{Skills and ordered acquisition}
A recently growing body of work has been concerned with how \emph{skills}, loosely defined as reusable abilities that contribute to predictions across examples, are acquired during training. The Quantization Model links frequency-ranked skill acquisition to emergence and scaling~\citep{quanta_model}. Other accounts explain the order differently: task imbalance or composition produces ``domino'' effects~\citep{liu2025physics}, while broad training can provide prerequisites for a narrow hierarchical skill~\citep{michaud2025narrow}. Skill-It uses dependencies between data-defined skills to choose training mixtures~\citep{chen2023skillit}, and language-model checkpoints show component tasks preceding composite ones~\citep{liu2026implicit}.

\spar{Discrete transitions beneath smooth loss}
Training trajectories can conceal progress or show sharp changes at different resolutions. Self-supervised models acquire representation modes stepwise~\citep{simon2023stepwise}. Induction heads emerge with a jump in in-context learning~\citep{olsson2022induction}, and syntactic attention structure appears during a brief loss drop~\citep{chen2023sudden}. Decomposing language-model loss by example groups reveals breakthroughs obscured by its mean~\citep{kangaslahti2026hidden}. These observations motivate looking below aggregate loss, yet a task-level transition cannot distinguish one reusable computation from several partial or joint acquisitions.

\spar{Features and circuits}
A feature is a learned component in a model's internal representation whose activation varies systematically with an input property and can influence downstream computation. Sparse autoencoders seek such components in activations~\citep{cunningham2023sae}, while attribution-based parameter decomposition seeks simple, sparsely used components in the original weights~\citep{braun2025apd}. Circuits ask how components interact to produce behavior~\citep{olah2020zoom}; circuit tracing studies these interactions on individual prompts using sparse MLP replacements~\citep{ameisen2025circuit}. Causal abstraction uses interventions to test whether a proposed high-level account faithfully captures the original network~\citep{geiger2021causal}.

\spar{Explanations of neural scaling}
Empirical laws relate language-model loss to parameters, data, and compute~\citep{kaplan2020scaling,scaling_laws}. This prompted a search for microscopic explanations of how such regularities emerge from the structure of data and the computations acquired during learning. Explanations based on data statistics invoke manifold resolution and kernel spectra~\citep{bahri2021explaining}, while hierarchical generators show how deep models acquire invariant representations~\citep{cagnetta2024hierarchy} and how rule frequencies can shape classification and next-token learning curves differently~\citep{cagnetta2025learningcurves}. Other work derives scaling laws from the acquisition of discrete skills, showing how frequency-ranked or power-law-distributed skills produce an unmet-loss tail~\citep{quanta_model,nam2024solvable}.
\section{Neural Dynamics in Q-Coordinates}
\label{sec:quantized_model}

Let $\mathcal N_t=(\theta_t,\Phi_N)$ denote a neural model at training time $t$ through its weights and their update rule, $\theta_{t+1}=\Phi_N(\theta_t)$. The model is trained on a distribution $\mathcal D$, with population loss $\mathcal L(\theta)=\mathbb E_{x\sim\mathcal D}[\mathcal L_x(\theta)]$. Its parameter trajectory describes learning exactly, but is too detailed to explain which computations emerge, while its scalar loss is concise, but discards those computations altogether. We seek a description between these extremes capable of following the model in function space while retaining only the changes that matter for its behavior.

\subsection{Functional quantization}
Different examples in a complex dataset often require duplicate work. In hierarchical tasks, the advantage is especially clear, as a reusable intermediate can replace many lower-level patterns and simplify each later computation that consumes it~\citep{cagnetta2024hierarchy}. More generally, a finite-capacity model is often incentivized to learn a computation once and apply it wherever it is useful~\citep{olah2020zoom}.
Let $\mathcal N_t(x)=\sum_{i=1}^M q_i(x,t)$ for every $x\in\mathcal D$ be a decomposition of the model into reusable, composable routines whose precise form we want to characterize. A function is primitive only if this model acquires it as one coordinated functional change. We assume that, at time $t$, the model has either acquired $q_i$ reliably or it has not (\emph{discreteness of acquisition}), and denote the corresponding change point by $\tau_i$. We further assume that $q_i$ is either required or not required to solve an input $x$ (\emph{discreteness of consumption}), and denote the inputs that require it by $A_i$. If $\operatorname{Pa}(i)$ are the predecessors of $i$ in an acyclic execution graph, its contribution is $q_i(x,t)=\mathbf 1\{t\geq\tau_i\wedge x\in A_i\}\,f_i(x,\{q_r(x,t):r\in\operatorname{Pa}(i)\})$.
The quanta abstract the circuits implemented by $\mathcal N_t$, which may themselves be highly nonlinear. We therefore impose no particular functional form on $f_i$. Each $q_i$ quantizes the knowledge available to the model, thus motivating the term \emph{quantum}~\citep{quanta_model}. The functional state of the model at time $t$ is reduced from all of $\theta_t$ to the acquired set $I_t=\{ q_i: \tau_i \leq t \}$ and the quanta required by each input.

\subsection{Learning in Q-coordinates}
\spar{Population-gradient priority}
A faithful Q-description must capture not only a checkpoint, but when and in what order its functions are acquired. For $\mathcal N_t$, we assume the classical gradient-descent update rule $\Phi_N(\theta_t)=\theta_t-\eta_t g_t$, where $g_{x,t}=\nabla_\theta\mathcal L_x(\theta_t)$ and $g_t=\mathbb E_x[g_{x,t}]$. For a sufficiently small step, $\mathcal L_x(\theta_{t+1})-\mathcal L_x(\theta_t)=-\eta_t\langle g_{x,t},g_t\rangle + O(\eta_t^2)$. We call $P(x,t)=\eta_t\langle g_{x,t},g_t\rangle$ the \emph{priority} of example $x$. To first order, this is the example's loss reduction under the population update; we ask how it is shared when many examples require the same missing quantum.

\spar{Demand and conditional complexity}
Consider a missing quantum $q$ and let $A_q$ contain the examples that require it. Its demand is the probability $p_q=\Pr_{x\sim\mathcal D}(x\in A_q)$. From the current acquired set $I_t$, let $v_q(I_t)$ be a unit direction in parameter space that builds $q$. Among the examples in $A_q$, the mean gradient progress along this direction is $s_q(I_t)=\mathbb E[-\langle v_q(I_t),g_{x,t}\rangle\mid x\in A_q]$. The projection $\langle v_q(I_t),\theta_{t+1}-\theta_t\rangle$ equals $-\eta_t\mathbb E_x[\langle v_q(I_t),g_{x,t}\rangle]$ and, assuming negligible projection outside $A_q$, it is approximately $\eta_t p_qs_q(I_t)$. When $q$ captures shared work, their updates reinforce one another in proportion to $p_q$, while unrelated examples contribute no systematic progress. A quantum with greater demand therefore receives more population-gradient signal.

Demand alone would imply that the most demanded missing quantum comes next, treating every function as equally accessible. This is unrealistic: consider a network learning $f(x)=x^8$. Even if it is required on every example, implementing it directly may be much harder than first learning ``squaring" and composing it to obtain $x^8$. Learning one quantum changes the available representation, and therefore the difficulty of learning the next. Let $T_q(I)$ be the time required to acquire quantum $q$ from the current acquired set $I$, and suppose that $d_q(I)$ is the progress needed to make $q$ reliable. While $I$ and the learning rate are approximately constant, the preceding progress relation gives $T_q(I)\simeq d_q(I)/(\eta p_qs_q(I))$. We collect the two sources of difficulty---how far the solution is and how effectively the gradient approaches it---into the \emph{conditional complexity} $\kappa_q(I)=d_q(I)/s_q(I)$. After absorbing the learning rate and a common proportionality constant into the unit of time, $T_q(I)\propto\kappa_q(I)/p_q$. Equivalently, we denote this acquisition priority by $\omega_q(I):=p_q/\kappa_q(I)$.
This yields an approximation of dynamics over weights by dynamics over quanta. Let $A(x)$ represent the quanta required by $x$, and let $\mathcal F(I_t)$ contain those currently accessible from $I_t$. After choosing units so that acquiring one quantum corresponds to one unit of functional progress, the event priorities induced by gradient descent become $P(x,t)\simeq\sum_{q\in A(x)\cap\mathcal F(I_t)}\omega_q(I_t)$. Demand determines how broadly gradient signal is shared, conditional complexity determines how effectively that signal can be converted into a new computation, and their ratio indicates which change occurs first.

\spar{Quanta structure}
Acquiring one computation can change how difficult it is to learn another. Let $r$ and $q$ be two quanta. Along a realized learning route, we write $r\prec_L q$ when $q$, given the current state of the model and data distribution, cannot be acquired before $r$, so $r$ is a necessary prerequisite of $q$. Thus $q$ is accessible only when all of its predecessors have been acquired. These relations are acyclic along a realizable trajectory, and their reflexive and transitive closure $\preceq_L$ therefore makes $(\mathcal Q,\preceq_L)$ the \emph{learning poset}.
In this way, we can approximate training in a high-dimensional parameter space by a trajectory through the feasible states of a poset. The current state $I$ exposes a precise frontier $\mathcal F(I)=\{q\notin I:\operatorname{Pred}(q)\subseteq I\}$ of possible next acquisitions. The poset separates the structural question of what can be learned next, $\Phi_Q(I)=I\cup\{q^\star(I)\}$, from the question of which frontier quantum receives the greatest priority, $q^\star(I)=\arg\max_{q\in\mathcal F(I)}\omega_q(I)$.
Mirroring the neural description $\mathcal N_t=(\theta_t,\Phi_N)$, we define the Q-model as $\mathcal Q_t=(I_t,\Phi_Q)$: its state is the acquired quanta and its update rule specifies the next acquisition, $I_{n+1}=\Phi_Q(I_n)$, where $n$ indexes acquisition events. Ideally, $\mathcal Q$ serves as a change of coordinates for every $x$ and $t$, representing the same functional changes as $\mathcal N$ through training.

\section{Quantized Dynamics and Scaling}
\label{sec:loss_dynamics}

Section~\ref{sec:quantized_model} describes learning through changes in the available computations. We now study how acquisitions compose, why their aggregate looks smooth, and how macroscopic regularities arise.

\subsection{Hierarchical Sparse Parity}
\begin{wrapfigure}[17]{R}{0.50\textwidth}
    \centering
    \includegraphics[width=0.85\linewidth,trim=0 25 0 60,clip]{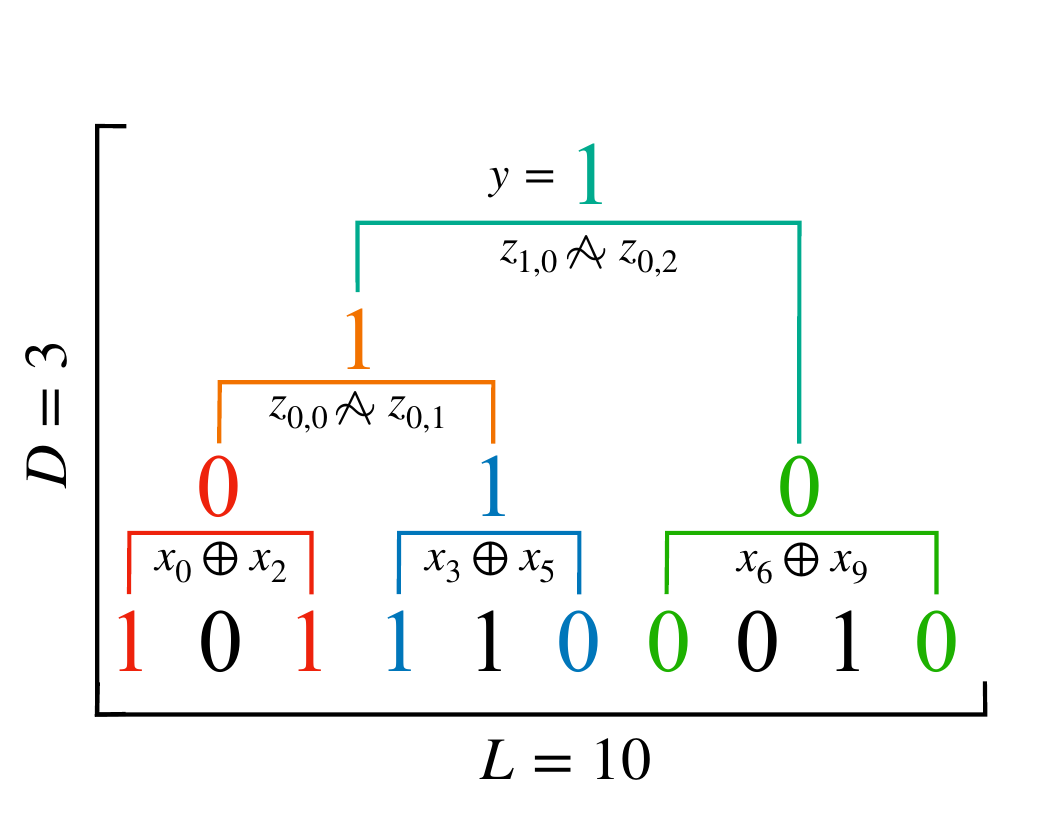}

    \captionsetup{font={small,stretch=1}}
    \caption{\textbf{HSP construction.} A length-$10$
    bit vector is read through sparse, local XORs and recursively composed by
    NAND. The learner receives only the vector, an opaque node identifier, and
    the target (Appendix~\ref{app:hsp_dynamics}).}
    \label{fig:hsp}
\end{wrapfigure}

Sparse parity (SP) is ideal for isolating feature discovery: the target depends on a small unknown subset of an otherwise unstructured input, so learning requires identifying and combining the relevant coordinates. Its Boolean factors are known exactly, so we can measure this functional progress directly rather than infer it from aggregate loss~\citep{barak2022hidden,abbe2023leap}. \citet{quanta_model} used Multitask SP as a tractable model of quantized skill acquisition, and later work considered hierarchies and compositions of these skills~\citep{michaud2025narrow}. We extend this setting with \emph{Hierarchical SP} (HSP), a controlled testbed for ordered composition.\begin{figure*}[t]
    \centering
    \includegraphics[width=0.90\textwidth]{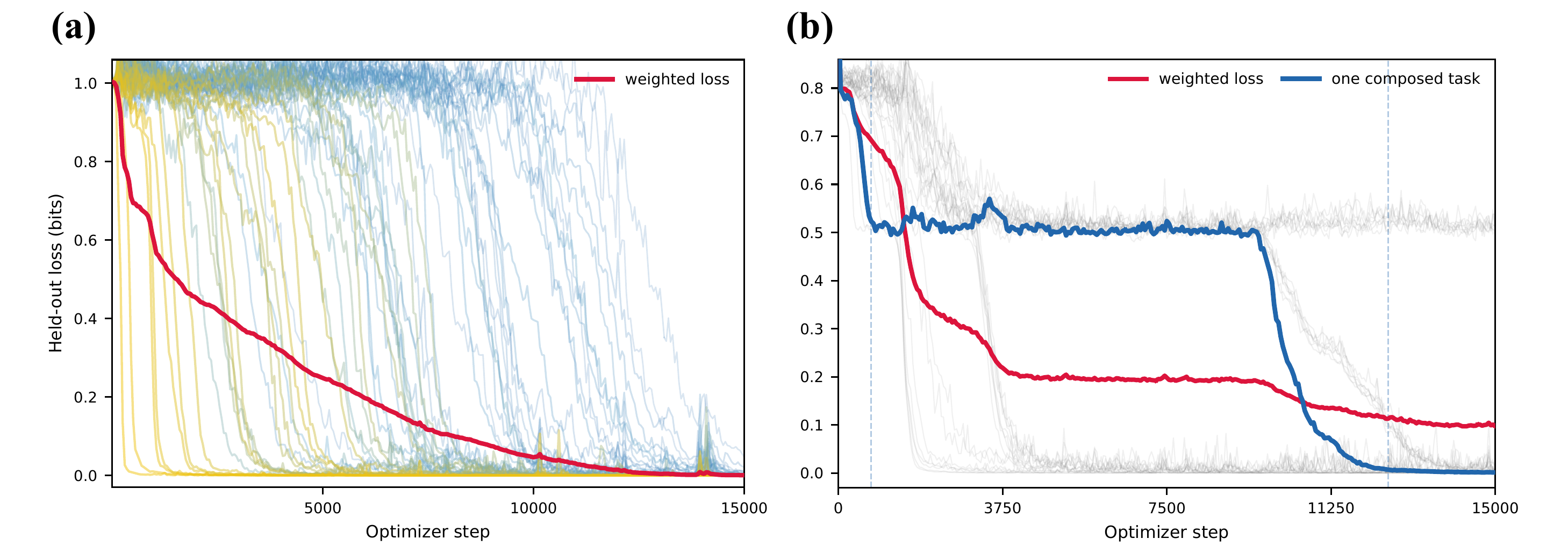}
    \caption{\textbf{Quantized loss dynamics in HSP.}
    \textbf{(a)} In a flat HSP, individual held-out losses transition sharply
    at different times, while their weighted aggregate (red) is much smoother:
    its transition is $22.6\times$ broader than the median task transition.
    Color runs from frequent (yellow) to rare (blue) tasks.
    \textbf{(b)} In a $32$-task composed HSP, one task (blue) pauses near the
    exact $1/2$-bit conditional level after acquiring its private constituent,
    then collapses when its shared constituent is acquired. Dashed lines mark
    the two $80\%$ functional-mode crossings; other task losses are gray and
    red is their weighted aggregate.}
    \label{fig:loss_overview}
\end{figure*}

HSP organizes Boolean computations as a forest (Figure~\ref{fig:hsp}). Each node $v$ owns a sparse support $S_v$ in a dense random bit vector $x$ and computes the local parity $z_v(x)=\bigoplus_{i\in S_v}x_i$. A root returns $y_v(x)=z_v(x)$, whereas every other node returns $y_v(x)=\operatorname{NAND}(y_{\operatorname{pa}(v)}(x),z_v(x))$. We use NAND because it is non-associative: moving a local parity to another depth generally changes the target, so the order of composition cannot collapse into one global parity. The learner sees no supports, tree, or intermediate outputs: HSP defines a compositional task basis without forcing the learner's execution graph.

\subsection{Loss Decomposition}
\spar{From discrete steps to smooth loss}
Let $\ell_x(t)$ be held-out cross-entropy on input type $x$. If $x$ consumes one quantum $q$, discreteness of acquisition predicts a single transition, $\ell_x(t)\simeq\ell_x^{(0)}-(\ell_x^{(0)}-\ell_x^\star)H(t-\tau_q)$. Figure~\ref{fig:loss_overview}(a) tests this prediction on $64$ degree-two parity tasks sampled at rank-dependent frequencies: their individual losses collapse at distinct acquisition times, yet their weighted mean decreases smoothly. Thus smooth overall improvement can arise from staggered functional breakthroughs rather than gradual progress on every task. If a task instead depends on multiple constituent quanta, their acquisition at different times predicts a staircase loss: learning one constituent partially solves the task and lowers its loss to an intermediate plateau, while learning the remaining constituents produces further discrete drops. Figure~\ref{fig:loss_overview}(b) realizes the simplest such case with $32$ composed tasks, each combining a private parity with one shared across tasks. The highlighted task first drops to an intermediate plateau when its private constituent is acquired, then drops again when the shared constituent is learned and the task is completed.

\spar{Implicit curriculum}
The same decomposition predicts an \emph{implicit curriculum}~\citep{liu2026implicit} even under i.i.d. sampling. When symbolic complexity is matched, demand acts as an acquisition clock: more frequent flat-HSP functions are learned earlier (Spearman $\rho=-0.80$; Figure~\ref{fig:loss_overview}(a)). With composition, demand competes with the lower conditional complexity created by an available parent. Varying only child-to-parent query frequency in a two-level HSP shifts acquisition from parent-first to predominantly child-first (Figure~\ref{fig:hsp_dynamics}), favoring respectively a deeper reuse-compatible order and a flatter direct-task route. Thus the same target functions can support different quanta poset geometries (deep or flat) under different data distributions.

\subsection{Scaling Laws from Poset Geometry}
\begin{wrapfigure}[30]{r}{0.4\textwidth}
    \centering
    \includegraphics[width=0.8\linewidth]{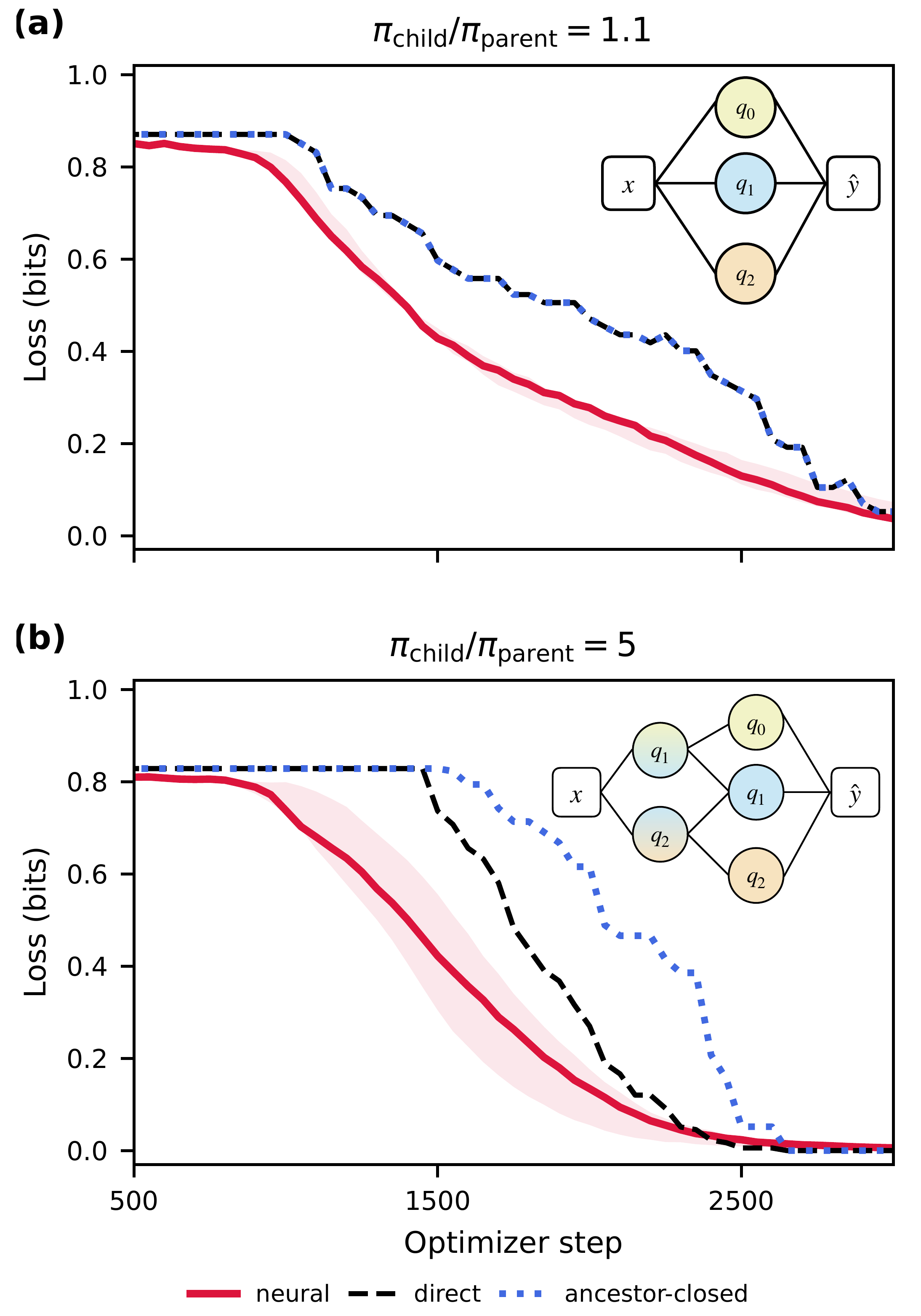}

    \caption{\textbf{Macroscopic effects of poset geometry.}
    \textbf{(a)} At ratio $1.1$ parents are acquired first and the direct-task and ancestor-closed loss reconstructions coincide, consistent with a compositional poset. \textbf{(b)} At ratio $5$, most children are acquired first and the direct-task reconstruction follows neural loss more closely, consistent with a shortcut interpretation. Curves show medians over three seeds; shading spans the range.}

    \label{fig:hsp_dynamics}
\end{wrapfigure}

\spar{From poset geometry to rank}
The experiment above suggests that the coarse geometry of a learning poset can already reveal whether a network builds on previously acquired computations or instead learns direct shortcuts. We now ask whether finer properties of this geometry can explain scaling behavior. Consider a network trained on natural language. Its learning poset is likely highly compositional: a relatively small set of broadly reused routines, such as identifying a noun phrase, lies near the root because these computations are required across many contexts and downstream operations. At greater depth, these routines can be composed into increasingly specialized skills, such as debugging an off-by-one error in a for loop. Any one such skill may occur rarely, but the number of distinct specialized computations grows rapidly with depth. The resulting geometry, therefore, induces a basic tradeoff: deeper regions of the poset contain more quanta, while each individual quantum typically receives less demand. We formalize this picture by relating the growth in the number of available computations with depth to the decay in their typical demand. Let $W(d)$ denote the cumulative number of quanta available up to depth $d$, and let $p(d)$ denote the typical demand of quanta near that depth. Since the number of quanta preceding depth $d$ determines the rank of a typical quantum at that depth, we take $k\asymp W(d)$. Over a scaling range, we assume that demand is approximately comparable within depth and that the relative growth of $W(d)$ and decay of $p(d)$ satisfy $\log p(d)=-(1+\alpha)\log W(d)+O(1)$ with $\alpha>0$. This assumption is intentionally coarse, requiring neither the number of computations nor their demand to follow a power law, but only constraining their relative rate of change so that the power law emerges from the geometry.

\spar{From rank to scaling laws}
A quantum near depth $d$ has rank $k\asymp W(d)$, so substituting rank for depth in the balanced-rate relation gives $p_k\asymp k^{-(1+\alpha)}$. If the first $K$ quanta form the learned prefix and missed quanta have comparable loss weights, summing their unmet demand yields $\mathcal L(K)-\mathcal L_\infty\asymp\sum_{k>K}p_k\asymp K^{-\alpha}$. We next map this functional prefix to ordinary resources. If the incremental representation cost of rank $k$ scales as $c_k\asymp k^{\delta_P}$, then $P(K)\asymp K^{1+\delta_P}$ and therefore $\mathcal L(P)-\mathcal L_\infty\asymp P^{-\alpha/(1+\delta_P)}$. Likewise, if rank $k$ requires $n_k\asymp k^{\delta_D}$ informative occurrences, matching these against the available $Dp_K$ examples gives $K(D)\asymp D^{1/(1+\alpha+\delta_D)}$ and hence $\mathcal L(D)-\mathcal L_\infty\asymp D^{-\alpha/(1+\alpha+\delta_D)}$ (proofs in Appendix~\ref{app:scaling_proofs}). In the experiments below, we consider $\delta_P=\delta_D=0$, so all quanta have rank-independent incremental representation and occurrence requirements.
We test these predictions by controlling HSP so that the depth--demand relation is imposed directly by the generator. We first study parameter scaling by varying the width of a fixed-depth MLP. Across the prescribed demand exponents, the resulting loss curves exhibit clear power-law regimes, with fitted slopes closely matching the theoretical predictions (Figure~\ref{fig:hsp_scaling}a). We then turn to data scaling, varying the number of online i.i.d. samples seen by the model. Here the trajectories are less clean: they begin with a shallow plateau before the first acquisition burst, pass through an intermediate learning regime, and eventually approach a second plateau; we fit only the central segment. The resulting fits again agree non-trivially with the theory but show greater trajectory instability and sensitivity to the fitting window, such as the criterion used to select its endpoint (Figure~\ref{fig:hsp_scaling}b).

\begin{figure*}[t]
    \centering
    \begin{overpic}[width=0.80\textwidth]{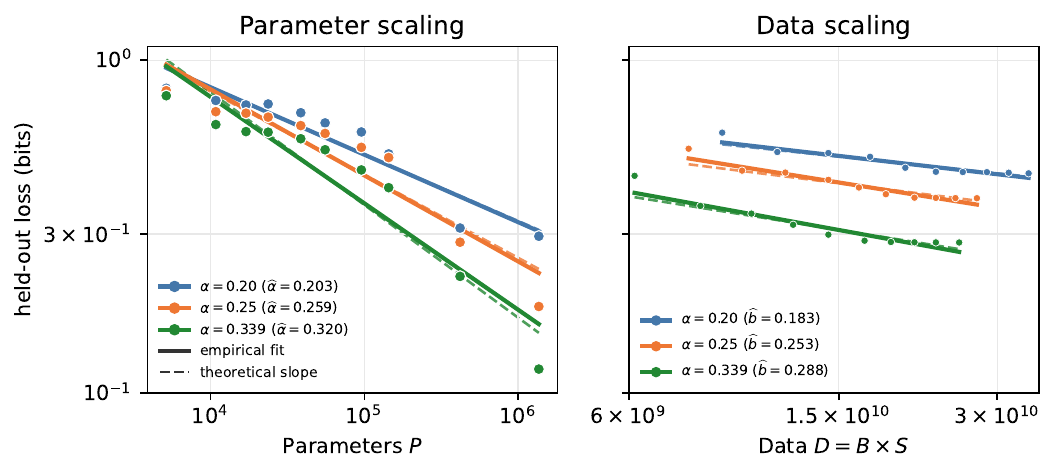}
        \put(6,42){\textbf{(a)}}
        \put(55,42){\textbf{(b)}}
    \end{overpic}
    \caption{\textbf{Fitted Scaling Laws.} \textbf{(a)} parameter scaling across ten widths. Across the three prescribed demand exponents, fitted slopes match the theoretical prediction with $1.4\%$, $3.5\%$, and $5.7\%$ relative error ($R^2\approx0.89$). \textbf{(b)} online-IID data scaling at fixed width $2048$, plotted over the central learning regime between an initial and late plateau; $D=BS$ is processed examples, with effective batch $B$ and optimizer steps $S$. Fitted slopes have $9.9\%$, $26.7\%$, and $13.7\%$ relative error from the prediction ($R^2 \approx 0.90$). Each color denotes the demand exponent $\alpha$; parenthesized values give the fitted parameter exponent $\widehat{\alpha}$ or data exponent $\widehat{b}$. Solid lines are log--log fits, and dashed lines give the prescribed slopes ($\alpha$ on the left and $\alpha/(1+\alpha)$ on the right).}
    \label{fig:hsp_scaling}
\end{figure*}

\section{Quanta Discovery}
\label{sec:number_naming}

We now recover a Q-model in a less controlled setting from training checkpoints ($\mathcal N\!\rightarrow\!\mathcal Q$) and inspect the neural trajectory in sparse coordinates. We also compile a complete functional program into a Q-model whose ordered updates guide neural training ($\mathrm{code}\!\rightarrow\!\mathcal Q\!\rightarrow\!\mathcal N$).

\subsection{Recovering a Q-Model}

\spar{Number Naming} Number naming is the problem of converting a sequence of digits into its reading as a cardinal \emph{number name} in a given language. For example, the digit sequence \textit{123} could be read as \textit{one hundred twenty three} in English. Number names involve a small vocabulary: words for basic numbers such as \textit{one}, \textit{fifteen}, and \textit{hundred}, which are not trivially derived from combinations of other numbers and thus must be learned. These atoms are combined by a limited set of syntactic rules controlled by a semantics that mostly reduces to sums of products of powers of the base~\citep{Hurford:75}. Children learn number naming from limited data: the basic words, learning to count to 100, and perhaps a few explicitly taught examples of larger numbers \cite{Sproat:22}. We use number naming because its well-understood structure makes recovered computations easy to interpret, while a short compositional procedure solves the task even with realistically limited training data on which Transformers generalize imperfectly.

\spar{Priority factors}
We train a three-layer decoder-only Transformer on 300 unique numbers spanning
one to six digits. Program-guided models are evaluated on a balanced set of $16{,}384$ numbers, including every number from 1 to 999 and sampled four- to six-digit numbers (Appendix~\ref{app:number_naming_data}), and specialized held-out panels (Appendix~\ref{app:q_discovery_details}). Our first objective is to recover an executable $\mathcal Q_t(x)$ that tracks $\mathcal N_t(x)$ along the source trajectory.
Let $e$ denote one teacher-forced prediction event and let $t$ index checkpoints. At source layer $\ell$, we measure the event's first-order credit under the complete population update as $P_\ell(e,t)\simeq\Lambda_t\langle\nabla_{\theta_\ell}L_e,
\nabla_{\theta_\ell}L_{\mathrm{train}}\rangle$, where $\Lambda_t=\sum_{s\in t}\eta_s$
is the interval learning-rate mass. Positive priority means that the population
update helps event $e$ through layer $\ell$. Thanks to \emph{quantization of
acquisition} we can decompose the priority in a small binary--temporal factorization $P_\ell(e,t)\approx\sum_q a_{\ell q}(e)\rho_{\ell q}(t)+R_\ell(e,t)$, where $a_{\ell q}(e)\in\{0,1\}$ and $\rho_{\ell q}(t)\geq0$. Starting from $R_\ell=P_\ell$, we add one factor at a time. At fixed $\rho$, an event enters the binary support exactly when doing so lowers its weighted squared residual; at fixed support, $\rho$ becomes the nonnegative support-weighted mean residual. We stop when the next factor explains less than a fraction $\tau$ of the \emph{original} host-layer squared error, leaving unexplained priority in $R_\ell$. The selected interpretability ablation uses $\tau=3\%$, giving 13 candidate coordinates. Figure~\ref{fig:qmodel_recovery}(b) shows their dynamics: three curves acquire nearly all their mass in the first saved interval, while the rest form separated bands between roughly 250 and 1,500 source steps.

\spar{Executable model}
We next freeze $a_{\ell q}$ and $\rho_{\ell q}$ and fit an executable Q-model by replaying 200 uniformly spaced source checkpoints. To approximate the Transformer's main operations, we model a block's read--write split using causal attention to read relevant prefix information and a sparse bank of scalar writers to reconstruct residual MLP updates and approximate the nonlinearity of $f_i$ in quanta (Section~\ref{sec:quantized_model}). The main loss terms are $\mathcal L_{\mathrm{rec}}+\lambda_g\mathcal L_{\mathrm{gate}}+\lambda_E\mathcal L_{\mathrm{edge}}$: $\mathcal L_{\mathrm{rec}}$ measures normalized source-block update error, $\mathcal L_{\mathrm{gate}}$ trains each local gate $g_{\ell q}$ against its frozen support $a_{\ell q}$, and $\mathcal L_{\mathrm{edge}}$ penalizes selected message channels. A hard-forward, soft-backward mask selects an acyclic set of earlier-to-later channels. With $h_q=h_0+\sum_{p\to q}m_p$, its message is $m_q=g_q(h_q)\sum_{w\in W_q}\gamma_w(h_q)\phi_w(h_q)v_w$. A root reads only $h_0$; when an edge $p\!\rightarrow\!q$ is selected, the child also receives the parent's actual message $m_p$. An edge must therefore improve reconstruction by enabling reuse, rather than merely record that two quanta occur in successive layers. An inactive parent contributes zero but does not close the child's local gate.

\subsection{Recovered Structure}

\begin{figure}[!t]
    \centering
    \includegraphics[width=0.85\textwidth]{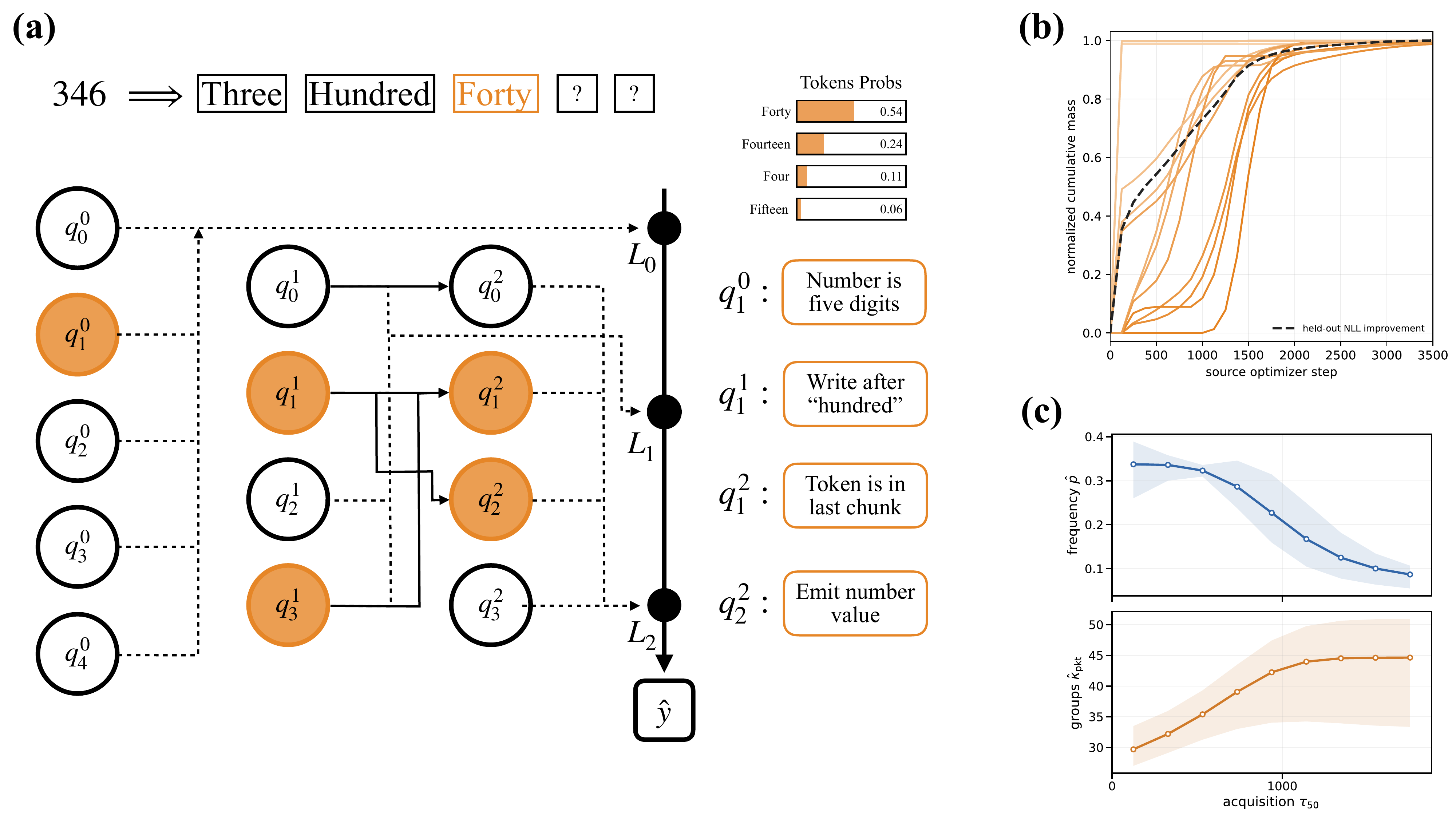}
    \caption{\textbf{From a neural trajectory to an executable
    Q-model.} \textbf{(a)} The third output step of the
    $346\rightarrow$\emph{three, hundred, forty, six, EOS} greedy trace. The source and Q-model both predict \emph{forty}; the bars show the Q-model's top-token probabilities. Orange nodes mark five active quantum gates, but only four writers emit nonzero contributions: in the current Q-model, gate eligibility and executed writer activity can disagree. Four frozen held-out probe labels are shown. Solid arrows denote selected inter-quantum message channels; dotted lines show the layerwise residual aggregation. \textbf{(b)} Normalized cumulative population-gradient priority for the 13 frozen candidates in the selected $3\%$ factorization. The dashed held-out NLL-improvement curve tracks their mean along this trajectory ($r=0.991$). \textbf{(c)} Across a finer 22-factor diagnostic basis, later factors are less frequent and their priority-weighted source updates involve more effective MLP groups.}
    \label{fig:qmodel_recovery}
\end{figure}

The recovered Q-model uses 35.7\% of the source Transformer's parameters while retaining much of its teacher-forced behavior (84.7\% source--Q argmax agreement). Its 39 available writers are sparse in execution (6.21 active on the average prediction event). The mean acquisition curve closely tracks the source model's held-out NLL improvement (Figure~\ref{fig:qmodel_recovery}b) and remains visibly step-like despite having no step-shaped training objective. Thus one sparse object describes both the trained function and when its components emerged. To test these coordinates systematically, we define a compact library of Boolean probes from the known task structure (e.g. output word and position, preceding word, value role, input digits). On a calibration split we select the best one- or two-property description for every quantum gate and writer, then freeze it and report F1 on disjoint events; writers are scored only when their owner quantum is active~\citep{cunningham2023sae}. Twenty-three of thirty-nine writers exceed held-out F1 $0.7$. Figure~\ref{fig:qmodel_recovery}a makes these statistics concrete for \textit{346}. The selected message channels matter causally within the Q-model: joint deletion or matched replacement of parent messages substantially degrades its behavior.
The acquisition order also agrees with the theory without being constrained to do so as the discovery algorithm imposes no explicit frequency or complexity constraint. In Figure~\ref{fig:qmodel_recovery}(c), we estimate demand $p_q$ by the frequency of its event support and approximate $\kappa_q$ by the effective number of source-MLP update groups carrying the factor's priority-weighted update (Appendix~\ref{app:q_discovery_details}). This packet statistic is only a simple proxy for the theoretical parent-conditioned complexity, but the predicted ordering is consistent: later factors tend to be less frequent and $p_q/\kappa_q$ decreases.

\subsection{Program-Guided Training}

The recovered Q-model motivates a second construction: probes associated with \textit{internal zeros} (e.g. \textit{3002}) activate only weakly, mirroring the source Transformer's largest generalization errors. The 300-example training set provides little evidence for this computation, raising a motivating question: can supplying the missing structure improve neural learning? A Q-model's computation graph can be read as a functional program: each quantum specifies a reusable computation, its gate determines when it executes, and each edge specifies which computation's output is supplied to another. We write a compact Python implementation of number naming whose annotated subroutines define quanta, then compile its execution conditions, data and control dependencies into a 20-quantum, 25-edge, ten-level graph (Appendix~\ref{app:qprogram_code}). We instantiate this graph as a neural Q-model with the same basic gated-message architecture as the recovered model and train it from scratch. Code traces identify the active quanta for every example, providing gate supervision throughout the graph, while messages and the readout remain learned. Its strong standalone generalization is expected given this inductive bias, but verifies that the compiled structure is executable: it reaches 97.8\% exact-sequence accuracy, compared with 84.2\% for a roughly parameter-matched Transformer.

\begin{wraptable}{r}{0.50\textwidth}
    \vspace{-0.5\baselineskip}

    \caption{\small \textbf{Q-guided training.} 
    Exact-sequence accuracy for a matched $10$-block Transformer. The zero-gap and short-tail columns contain $4{,}031$ and $1{,}553$ train-disjoint examples.}
    \label{tab:qmodel_steering}

    \vspace{-0.35em}
    \centering
    \resizebox{0.88\linewidth}{!}{%
        \begin{tabular}{lrrr}
            \hline
            Training & Balanced & Zero gap & Short tail \\
            \hline
            CE only & $84.9$ & $61.7$ & $50.5$ \\
            $+$Q target & $\mathbf{94.7}$ & $\mathbf{89.0}$ & $\mathbf{89.9}$ \\
            \hline
        \end{tabular}
    }

    \vspace{-0.55\baselineskip}
\end{wraptable}

More importantly, we use this trained Q-model as a removable target for an ordinary Transformer. For each prediction event, the frozen Q-model sums its messages at depth $d$ into $Z_d$. A fixed semi-orthogonal map aligns this target with the corresponding block update $\Delta h_d$ of a ten-layer Transformer, trained with
$\mathcal L_{\mathrm{guided}}=\mathcal L_{\mathrm{CE}}+
\lambda_Q\mathbb E_{e,d}\|U^\top\Delta h_d(e)-
\operatorname{sg}[Z_d(e)]\|_2^2$.
The Q-model is absent from evaluation and deployment. This training target improves exact-sequence accuracy from 84.9\% to 94.7\%, with the largest gains on zero gaps and short tails (Table~\ref{tab:qmodel_steering}); the Transformer blocks also align most closely with their assigned Q-depth updates, unlike under cross-entropy alone. Thus an explicit Q-description can provide useful process supervision without constraining the deployed architecture.

\section{Discussion}
\label{sec:discussion}

\spar{Summary}
We described learning as the acquisition of reusable functional units, or quanta, and derived how their demand and conditional complexity shape acquisition order. In hierarchical sparse parity, we used known computations to study discrete loss reductions, changes in learning order across data distributions, and scaling regimes linked to the geometry of composition. We then recovered candidate quanta from a Number Naming Transformer's training trajectory and built an executable Q-model to examine their behavior, semantics, and message passing. Finally, a Q-model derived from a program supplied training targets that improved a separate Transformer's generalization. Together, these steps connect a theory of functional acquisition to measured dynamics, model interpretation, and guided training.

\spar{Limitations}
Our experiments make the proposed units and their effects unusually accessible, but they do not yet test the framework on a complex, uncontrolled dataset such as natural language or on large language models. Number Naming introduces an ordinary Transformer and a recovered decomposition, yet its data still follow a compact, known procedure. Applying the method to less structured tasks will require checking whether recovered units remain stable across runs and whether interventions on the Q-model reflect mechanisms in the source model.
The loss and scaling analysis is grounded in hierarchical parity, where a model may be able to acquire each parity computation in one coordinated change. It remains unclear how far this discrete-acquisition picture extends to complex skills that may develop gradually or combine many partly learned routines. The scaling results also need broader validation across seeds, architectures, resource ranges, and data distributions, especially where finite-range fits depend on the chosen window. Our current derivations treat resource regimes separately under specific assumptions; a further theoretical goal is a single formula that combines parameter, data, and other resource scaling and predicts their crossovers.

\spar{Conclusion}
We believe the quanta abstraction can provide a common framework for studying how models acquire, reuse, and compose computations. It offers a way to relate learning dynamics, interpretable behavior, and aggregate performance across different models and tasks.

\clearpage

\bibliography{references}
\bibliographystyle{others/iclr2027_conference}

\clearpage
\appendix
\section*{\Large Appendix}
\section{HSP Construction and Additional Dynamics}
\label{app:hsp_dynamics}

\subsection{Generator and Task Construction}
HSP begins with a forest of $R$ roots, branching factor $\rho$, and maximum depth $D_{\max}$. Every node is assigned a unique degree-two support sampled from a common pool of sensor coordinates. We reject a proposed support if it has already appeared elsewhere or if its parity mask is linearly dependent over $\mathrm{GF}(2)$ on the masks earlier in the same root-to-node path. Consequently every level introduces new information, while supports on different branches may still overlap through the shared pool. Recursive NAND is non-associative, so moving one of these local parities to another depth generally changes the target function.

To produce an example, we sample independent balanced sensor bits $x$, choose one queried node $v$ with probability $\pi_v$, and return only $(x,v,y_v(x))$. The learner receives the complete sensor vector and an opaque task identifier, but not the support sets, forest edges, local parities, or ancestor outputs. The hierarchy is therefore a generator-side compositional basis rather than an enforced neural execution graph. For the scaling experiments, we first prescribe ancestor-closure demands by depth and then recover the nonnegative direct query probabilities by tree M\"obius inversion; Appendix~\ref{app:hsp_sampler} gives the exact construction and its relation to the target exponent.

\subsection{Conditional-Loss Staircases}
For a composed task, let $B_x(S)$ be the best loss attainable when only $S\subseteq A(x)$ is available. The corresponding Q-prediction is the finite staircase $\ell_x(t)\simeq B_x(I_t\cap A(x))$: each newly available constituent removes its conditional part of the loss. These levels can be computed exactly in small Boolean compositions. For a NAND of two independent balanced inputs, the Bayes cross-entropy is $H_2(3/4)$ bits with neither input known, $1/2$ bit with one known, and zero with both. These are predictions from a declared functional basis; observing the levels alone does not identify the model's internal representation.

Figure~\ref{fig:loss_overview}(b) tests the two-factor prediction with $32$ task-conditioned targets. Each target NANDs a task-private degree-one parity with a degree-three parity shared by eight tasks. The two roles draw unique supports from separate fixed $32$-bit pools, with another $32$ dense distractor bits. A two-hidden-layer width-$256$ ReLU MLP is trained with Adam on fresh i.i.d. batches of $512$ examples and evaluated every $50$ steps on an exact balanced panel. For the highlighted task, the private mode crosses $80\%$ at step $750$ and the shared mode at step $12{,}550$. Its held-out loss stays near the predicted $1/2$-bit conditional plateau between the two and first reaches exact classification at step $11{,}400$. The run acquires all $36$ declared modes, and all $32$ tasks are exact simultaneously at the best checkpoint; $28/32$ remain exact at the final checkpoint, where the weighted loss is $0.00264$ bits. This single run demonstrates a learned staircase in one composed task; it does not establish that the MLP internally executes the declared factorization.

\subsection{Flat Acquisition Control}
The main text uses the flat HSP run to isolate sharp primitive acquisitions. Figure~\ref{fig:loss_overview}(a) contains $64$ unique degree-two parities sampled from a shared $32$-bit pool, an additional $32$ dense distractors, and task frequencies proportional to $r^{-1}$. A two-hidden-layer width-$256$ MLP is trained with Adam at batch size $512$. The displayed run acquires all $64$ functional coefficients by step $13{,}700$ and all tasks exactly by step $13{,}500$; frequency and acquisition time have Spearman correlation $-0.800$ and the log--log clock fit has $R^2=0.810$. With the frequency assignment fixed, two fresh model seeds acquire $63$ and $62$ coefficients and give correlations $-0.766$ and $-0.709$ by step $20{,}000$. Two fresh rank assignments acquire $64$ and $62$ coefficients and give correlations $-0.712$ and $-0.629$. Thus the frequency curriculum is replicated but imperfect: task-specific optimization and shared-feature geometry still contribute to timing.

\section{HSP Acquisition Diagnostics}
\label{app:hsp_diagnostics}

\spar{Acquisition criterion and two-level ordering}
HSP acquisition times are measured with a bias-insensitive held-out functional coefficient. A target is counted as acquired only when this coefficient remains above $0.8$ for five evaluations; this avoids treating the biased marginal of a NAND target as functional learning. In the two-level comparison of Figure~\ref{fig:hsp_dynamics}, consisting of roots and one child layer, every one of the $48$ functions crosses this criterion in each of three model seeds. The pooled child-first counts are $0/96$, $7/96$, and $73/96$ at child/parent direct-frequency ratios $1.1$, $2$, and $5$. At $5\times$, the median reconstruction error through step $4{,}000$ is $0.070$ bits for the direct-task basis and $0.136$ bits for the ancestor-closed basis; at $1.1\times$ the two reconstructions coincide.

\spar{Acquisition clock across depth}
The separate $60$-node hierarchy tests the acquisition clock over more depths. A fit of $T\propto\pi^{-\gamma}$ gives $\gamma=0.982$ and $R^2=0.914$. Adding depth as a conditional-complexity factor estimates a $1.14\times$ time cost per level and raises $R^2$ to $0.932$. This supports a soft depth cost after accounting for direct frequency, not a strict prerequisite relation or internal reuse of parent messages.

\section{The HSP Demand Sampler}
\label{app:hsp_sampler}

\spar{From closure demand to query probability}
For the scaling experiment, consider $R$ regular $\rho$-ary trees and prescribe the per-node ancestor-closure demand $P_v=R^{-1}\beta^{-d(v)}$.

The learner is trained on one terminal query per example, so these overlapping closure demands must be converted into a distribution over queried nodes. Tree M\"obius inversion gives $\pi_v=P_v-\sum_{c\in\operatorname{ch}(v)}P_c$. For non-leaves this simplifies to $\pi_v=P_v(1-\rho/\beta)$, while $\pi_v=P_v$ at the leaves. The condition $\beta>\rho$ makes every probability nonnegative. Summing this identity over the forest telescopes to the root mass, $\sum_v\pi_v=\sum_{r\text{ root}}P_r=1$. Moreover, the probability that a query lies in the subtree of $v$ is exactly $P_v$, so sampling $v\sim\pi$ and evaluating its ancestor path realizes the prescribed closure marginals.

\spar{Finite-depth effects}
The finite boundary is visible. The total terminal mass at depth $d<D_{\max}$ is $(\rho/\beta)^d(1-\rho/\beta)$, while leaf mass is $(\rho/\beta)^{D_{\max}}$. For the present $\rho=2$, $\beta=2^{1.2}$, and $D_{\max}=5$, one half of all terminal queries are leaves. This makes direct leaf shortcuts more attractive than in an infinite hierarchy and is one reason the model-side frontier must be measured separately from the generator-side demand law.

\section{HSP Scaling Experiments}
\label{app:hsp_scaling_experiments}

\subsection{Parameter-Sweep Protocol}
The parameter sweeps use $R=8$, branching factor $\rho=2$, maximum depth $5$, and $504$ task identities. The graph construction, sparse supports, model, data and evaluation seeds, optimizer, and evaluation panel are fixed across widths $8,16,24,32,48,64,96,128,256,512$. Each learner is a five-hidden-layer task-conditioned ReLU MLP trained with maximal-update parameterization (base/delta widths $8/16$) and MuSGD at learning rate $0.0075$ and zero weight decay. The effective batch and microbatch sizes are both $16{,}384$, so every update is computed in one pass. The support, rank, and model seeds are $0$, the online-data seed is $7000$, and the evaluation seed is $10000$. The three values $\beta=2.29739671$, $2.37841423$, and $2.53$ prescribe $\alpha=0.20$, $0.25$, and $0.339137$, respectively; all other fields are matched. At each evaluation, the same $256$ sensor draws are reused for every task and checkpoint. We compute taskwise held-out binary cross-entropy in bits and weight it by the direct query probabilities $\pi_v$ before fitting.

\subsection{Parameter Fits and Stability}
We fit $\log \mathcal L=a-\widehat\alpha\log P$ by ordinary least squares over all ten widths, where $P$ is total trainable parameters, including task conditioning, and $\mathcal L$ is the weighted held-out loss without an asymptote subtraction. We report $R^2$ in the same log--log space and relative slope error $|\widehat\alpha-\alpha|/\alpha$. To reduce evaluation noise, every plotted point is the median of the evaluations in the $25{,}000$ steps ending at its selected checkpoint. For $\alpha=0.20$ and $0.25$, all widths use the common $15$M and $10$M checkpoints. For $\alpha=0.339$, stable widths use $20$M; width $24$ uses $18$M after collapsing by $19$M, and widths $128$ and $256$ use $19$M before late instabilities at $20$M. Thus the latter curve uses the last stable integer-million checkpoint in a fixed $18$--$20$M window, rather than selecting checkpoints by proximity of the fitted slope to its target. This stability-window rule is an exploratory response to observed late failures and should be fixed in advance for replication.

The fitted exponents are $0.20285$, $0.25870$, and $0.31996$, giving relative errors $1.43\%$, $3.48\%$, and $5.66\%$, with $R^2=0.8951$, $0.8863$, and $0.8913$. The approach to these slopes is not a single-checkpoint accident: representative late common fits retain high log--log agreement ($R^2=0.895$ at $15$M for $\alpha=0.20$, $0.892$ at $9$M and $0.886$ at $10$M for $\alpha=0.25$, and $0.904$, $0.899$, and $0.898$ at $15$M, $17$M, and $18$M for $\alpha=0.339$). The specified checkpoint fits are close to the prescribed slopes, but we observe sporadic late optimization instabilities, possibly due to muP, in which models temporarily forget acquired tasks before recovering with further training to roughly their previous loss and acquisition count.

\subsection{Data-Scaling Protocol}
For the data panel, $D=BS$ denotes processed fresh i.i.d. examples, where $B=4096$ is the effective batch and $S$ is the optimizer-step count; $D$ is not the size of a stored training set. We use one width-2048 trajectory per exponent, with tree depth eight ($4088$ tasks), eight hidden layers, and a fixed $256$-sensor evaluation panel. Seeds and optimization are matched across exponents. The $\alpha=0.339137$ trajectory resumes from its 8M-step checkpoint. At each 100k-step evaluation, we take the median weighted held-out loss over that point and its immediate neighbors. A distinct acquisition event is an increase in the number of functional coefficients above $0.8$; the fit starts at the third such event. A new best loss must improve by at least $0.05\%$ to reset a ten-evaluation patience counter. The endpoint is the best checkpoint immediately before that counter expires, and ordinary least squares fits all smoothed losses in the selected interval as $\log\mathcal L=a-\widehat b\log D$, without subtracting a loss floor. The resulting windows are 2.2M--8.4M, 1.9M--6.7M, and 1.5M--6.2M steps for $\alpha=0.20$, $0.25$, and $0.339137$, respectively. The dashed data references use slope $\alpha/(1+\alpha)$ and a re-centered intercept. This shared post-hoc convention avoids theory-targeted window selection, but it is not a convergence estimator.

\subsection{Fit-Window Sensitivity}
To expose the finite-window dependence, we split each selected interval at the geometric mean of its endpoints and refit the earlier and later checkpoints separately, with no change to smoothing or loss definition. The table gives the fitted $\widehat b$; the target is $\alpha/(1+\alpha)$. The later slope is smaller in every arm, although each half spans only about a factor of two in exposure. A high full-window $R^2$ therefore does not establish a stationary exponent.

\begin{table}[t]
    \centering
    \caption{\small \textbf{Data-fit window sensitivity.} Equal-log-span halves of the acquisition/plateau-selected interval; ranges are optimizer steps and every fit uses $D=4096S$.}
    \label{tab:hsp_data_window_sensitivity}
    \small
    \setlength{\tabcolsep}{4pt}
    \begin{tabular}{rrrll}
        \hline
        $\alpha$ & Target & Full fit & Earlier range / slope & Later range / slope \\
        \hline
        $0.20$ & $0.1667$ & $0.1832$ & $2.2$--$4.2$M / $0.1882$ & $4.3$--$8.4$M / $0.1072$ \\
        $0.25$ & $0.2000$ & $0.2534$ & $1.9$--$3.5$M / $0.2838$ & $3.6$--$6.7$M / $0.1532$ \\
        $0.339$ & $0.2533$ & $0.2879$ & $1.5$--$3.0$M / $0.4339$ & $3.1$--$6.2$M / $0.1164$ \\
        \hline
    \end{tabular}
\end{table}

\section{Derivations for the Scaling Regimes}
\label{app:scaling_proofs}

\subsection{Demand-Ranked Loss}
Throughout, $a_k\asymp b_k$ means that $c b_k\leq a_k\leq C b_k$ for constants $0<c<C<\infty$ independent of rank and resource scale over the stated range. The asymptotics describe a countable family, or a sequence of finite systems before saturation. A fixed finite HSP eventually exhausts its quantum vocabulary and leaves this regime.

\begin{lemma}[Power-law tail]
\label{lem:power_law_tail}
For $\alpha>0$, $\sum_{k>K}k^{-(1+\alpha)}\asymp K^{-\alpha}$. Consequently, if $p_k\asymp k^{-(1+\alpha)}$, then $\sum_{k>K}p_k\asymp K^{-\alpha}$.
\end{lemma}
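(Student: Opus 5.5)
The plan is to compare the tail sum with an integral via monotonicity of $f(u)=u^{-(1+\alpha)}$ on $(0,\infty)$, and then transfer the two-sided bound to $p_k$ using the constants in the definition of $\asymp$. Throughout, $K\geq 1$ is an integer.

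First, because $f$ is decreasing, for each $k\geq 2$ we have
\[
\int_k^{k+1}f(u)\,du\leq f(k)\leq\int_{k-1}^{k}f(u)\,du.
\]
Summing over $k>K$ gives
\[
\int_{K+1}^{\infty}u^{-(1+\alpha)}\,du\;\leq\;\sum_{k>K}k^{-(1+\alpha)}\;\leq\;\int_{K}^{\infty}u^{-(1+\alpha)}\,du.
\]
Both integrals converge because $\alpha>0$, and they evaluate to $(K+1)^{-\alpha}/\alpha$ and $K^{-\alpha}/\alpha$ respectively. The upper bound therefore holds with constant $C=1/\alpha$. For the lower bound, use $K+1\leq 2K$, so that $(K+1)^{-\alpha}\geq 2^{-\alpha}K^{-\alpha}$, which gives $c=2^{-\alpha}/\alpha$. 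Neither constant depends on $K$, which is exactly the uniformity the paper's $\asymp$ convention requires.

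Second, suppose $c_0k^{-(1+\alpha)}\leq p_k\leq C_0k^{-(1+\alpha)}$ for all $k$ in the range. Summing termwise over $k>K$ gives
\[
c_0\sum_{k>K}k^{-(1+\alpha)}\leq\sum_{k>K}p_k\leq C_0\sum_{k>K}k^{-(1+\alpha)}.
\]
Combining this with the first step yields $\sum_{k>K}p_k\asymp K^{-\alpha}$ with constants $c_0c$ and $C_0C$. The series converges because it is dominated by a convergent $p$-series.

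The only delicate point is scope. The paper's convention allows $\asymp$ to hold only "over the stated range," for instance before saturation of a finite system. If $p_k\asymp k^{-(1+\alpha)}$ holds only for $k\leq N$, the tail also contains terms beyond $N$. I would handle this in one of two ways: state the lemma for a countable family, where the comparison holds for all $k$, or, for a finite system, assume the mass beyond $N$ is $O(K^{-\alpha})$ and restrict to $K$ with $K\leq \epsilon N$. In the finite case the truncated integral $\int_K^N$ is still comparable to $K^{-\alpha}$, since $N^{-\alpha}\leq\epsilon^{\alpha}K^{-\alpha}$. I expect this bookkeeping to be the main obstacle; the analytic estimate itself is routine.
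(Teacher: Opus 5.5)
Your proposal is correct and follows the paper's argument: integral comparison for the decreasing function $u^{-(1+\alpha)}$ gives $(K+1)^{-\alpha}/\alpha\le\sum_{k>K}k^{-(1+\alpha)}\le K^{-\alpha}/\alpha$, and the second claim follows by multiplying through by the constants in $p_k\asymp k^{-(1+\alpha)}$. Your step $(K+1)^{-\alpha}\ge 2^{-\alpha}K^{-\alpha}$ for $K\ge 1$ makes explicit the uniform lower constant that the paper leaves implicit. Your scope remark matches the paper's restriction to a countable family, or to a finite system before saturation.
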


\begin{proof}
Monotonicity and integral comparison give $(K+1)^{-\alpha}/\alpha\leq\sum_{k>K}k^{-(1+\alpha)}\leq K^{-\alpha}/\alpha$.

Multiplying these bounds by the constants implicit in $p_k\asymp k^{-(1+\alpha)}$ proves the second statement.
\end{proof}

\begin{proof}[Demand-ranked loss tail]
The scaling-range assumption is equivalent to $p(d)\asymp W(d)^{-(1+\alpha)}$. Within-depth comparability gives $p_q\asymp p(d)$, and rank regularity gives $k\asymp W(d)$ for a quantum at depth $d$. Therefore $p_k\asymp p(d)\asymp W(d)^{-(1+\alpha)}\asymp k^{-(1+\alpha)}$.

Lemma~\ref{lem:power_law_tail} and the assumed unmet-demand loss bridge then prove the demand-ranked loss relation used in Section~\ref{sec:loss_dynamics}. For the geometric specialization, $W(d)\asymp\rho^d$ and $p(d)\asymp\beta^{-d}$ give $1+\alpha=\log_\rho\beta$. For polynomial growth $W(d)\asymp d^u$ and decay $p(d)\asymp d^{-v}$, the same argument gives $1+\alpha=v/u$. Thus the result depends on their relative growth, not on a particular tree geometry.
\end{proof}

\subsection{Resource Scaling}
\begin{proof}[Resource scaling laws]
If the incremental representation cost is $c_k\asymp k^{\delta_P}$ and costs are additive up to constant factors, then $P(K)\asymp\sum_{k\leq K}c_k\asymp K^{1+\delta_P}$ and $K(P)\asymp P^{1/(1+\delta_P)}$.

Substitution into the unmet-demand tail yields $\mathcal L(P)-\mathcal L_\infty\asymp K(P)^{-\alpha}\asymp P^{-\alpha/(1+\delta_P)}$.

For data scaling, a dataset of size $D$ contains order $Dp_k$ informative occurrences of rank $k$. The sample-threshold assumption $n_k\asymp k^{\delta_D}$ therefore sets the frontier by $DK^{-(1+\alpha)}\asymp K^{\delta_D}$, hence $K(D)\asymp D^{1/(1+\alpha+\delta_D)}$.

Applying the same tail gives $\mathcal L(D)-\mathcal L_\infty\asymp K(D)^{-\alpha}\asymp D^{-\alpha/(1+\alpha+\delta_D)}$.

Together, these two substitutions give the parameter and data scaling laws used in Section~\ref{sec:loss_dynamics}.
\end{proof}

\spar{Assumptions and scope}
The resource derivation assumes a realizable ordered frontier and sufficient optimization; it does not follow merely by counting nominal parameters. The sample-threshold law, the frontier condition, and the unmet-demand loss bridge are also separate empirical assumptions. In particular, a parameter- or data-scaling curve alone cannot establish the poset mechanism.

\section{Number Naming Data and Evaluation}
\label{app:number_naming_data}

\subsection{Task Representation and Metrics}
For an integer such as \textit{42017}, the model receives the causal prompt \texttt{[BOS] <D4> <D2> <D0> <D1> <D7> [SEP]} and predicts \emph{forty two thousand seventeen [EOS]}. Decimal inputs are not zero padded. Cross-entropy begins at the first output word and includes termination; the vocabulary consists of the special symbols, ten digit tokens, and the English words present in the constructed data. We report teacher-forced token accuracy and greedy exact-sequence accuracy, with the latter requiring correct termination.

\subsection{Compositional-Generalization Split}
The \emph{compositional-generalization split} contains $300$ unique training numbers below one million. A capacity-aware allocation gives $9$ one-digit examples and $59, 58, 58, 58, 58$ examples at lengths two through six. Before filling the remaining slots, the builder reserves lexical forms, teens, exact tens, representative hundreds, internal-zero cases, thousand boundaries, exact teen-thousand blocks, and selected high/low chunk combinations. This prevents a missing primitive from masquerading as failed composition while leaving most larger combinations unseen.

The corresponding balanced evaluation, used for the program-derived and Q-guided models, has $16{,}384$ examples: every integer from $1$ through $999$ appears once, and the remaining budget is allocated without replacement across four-, five-, and six-digit numbers. The first block provides complete lexical and short-composition coverage rather than a strictly held-out set; the present evaluation contains $160$ train/evaluation overlaps. We therefore also report train-disjoint results and treat the larger-number region as the main recombination test.

\spar{Targeted diagnostics}
The zero-gap diagnostic contains the $4{,}031$ train-disjoint examples whose decimal representation includes the pattern nonzero--one-or-more-zeros--nonzero. The short-tail diagnostic contains the $1{,}553$ train-disjoint numbers at least $1{,}000$ whose final three-digit chunk lies in $1$--$99$. The 300-example training set contains 64 zero-gap numbers and 30 short-tail numbers, with the former distributed across different zero positions. These subsets isolate optional-place suppression and cross-thousands recombination rather than being selected by model correctness.

\subsection{Length-Balanced Coverage Control}
The \emph{length-balanced split} samples an equal quota from every available digit length for training and constructs evaluation independently with the same length balance. Sampling is with replacement and train/evaluation overlap is allowed. This regime is not the strongest generalization test; it is a coverage control that prevents the much larger set of long numbers from dominating aggregate metrics and priority estimation. Digit-wise evaluation reports each length separately before averaging.

\section{Quanta Discovery, Compilation, and Steering Details}
\label{app:q_discovery_details}

The functional decomposition does not require the corresponding parameters to be disentangled: several quanta may be superposed or share parameters, provided that their contributions can still be tracked at the chosen functional resolution.

\subsection{Discovery and Executable Fitting}
\spar{Priority factorization}
The reference source is a three-layer, width-$32$, one-head Transformer trained
for $5{,}000$ exact full-population gradient steps on the $300$ examples.
Q-discovery measures the population-gradient priority defined in
Section~\ref{sec:number_naming} from the complete update. Within each layer,
we greedily fit the binary--temporal model
$P_{et}\approx\sum_q a_q(e)\rho_q(t)+R_{et}$. For current residual $R$, one
component minimizes $\sum_{e,t}w_e(R_{et}-a_e\rho_t)^2$, with uniform event
weights $w_e$. At fixed $\rho$, $a_e=1$ exactly when
$2\sum_tR_{et}\rho_t>\sum_t\rho_t^2$; ties are excluded. At fixed support,
$\rho_t=\max(0,\sum_ew_ea_eR_{et}/\sum_ew_ea_e)$. Empty supports are rejected,
and alternating updates stop after unchanged support and curve or 40 steps.
The curves remain in interval-priority units; only their cumulative positive
mass is normalized for the timing visualization.

Candidate fitting starts from four
temporal initializations---the positive weighted residual mean, the positive
and negative leading right-singular vectors clipped to the nonnegative
orthant, and a seeded random curve. Each initialization receives at most 40
alternating support/curve updates, and the lowest-error candidate is proposed.
The residual is updated only if this candidate clears the stated fraction of
the layer's original priority SSE. Consequently, the component ceiling does
not determine the retained count, and changing the threshold refactorizes the
saved priority field rather than merely changing an evaluation setting.

The interpretability model is selected from a controlled sweep
that varies this threshold, writer width, and one global per-event writer
target while keeping the source trajectory, replay, seed, and objectives
fixed. Its $3\%$ threshold retains 13 candidates, distributed $[5,4,4]$
across source layers; every retained candidate clears the $3\%$ test against its host layer's original error. The $1\%$, $3\%$, and $5\%$ refactorizations are threshold-robustness diagnostics of the same saved priority field.

On the held-out priority panel, the selected binary model explains $52.6\%$,
$52.7\%$, and $44.9\%$ of SSE across the three layers. A rank-matched signed
SVD baseline explains $79.0\%$, $74.7\%$, and $78.1\%$; negative priority
accounts for $19.0\%$, $20.6\%$, and $20.0\%$ of absolute priority mass. This
baseline tests the reconstruction cost of the binary, nonnegative restriction,
not an alternative semantic Q-model.

\spar{Executable fitting}
The Q-model replays $200$ uniformly spaced source checkpoints in IID order for
$5{,}000$ persistent Adam updates. At every replayed checkpoint, the frozen
factor curves set candidate existence and the frozen supports label the local
gates. The functional loss reconstructs source block updates at all valid token
positions, normalized by each source layer's total update energy; output KL is
evaluated at prediction events. Gate
BCE is class-balanced, and reconstruction and KL gradients are detached from
the theory-facing gate logits. Writer and edge-count penalties are ramped in
over the first $1{,}000$ Q updates. Channel entropy begins after update $2{,}500$
and the final hard graph is frozen before evaluation. Writers use fixed-threshold
JumpReLU gates and unit-normalized output directions; no dense attention-output
path bypasses the writer dictionary.

The selected Q-model has $10{,}432$ parameters versus $29{,}196$ in the source,
three writers per candidate, 39 available writers, and four selected global
message channels. A target of six total active writers per event realizes
6.21, alongside 4.66 active quanta per event.

\subsection{Evaluation and Semantic Diagnostics}
On the saved $300$-example held-out panel, source and Q-model teacher-forced token accuracies are $93.1\%$ and $86.3\%$, source--Q token agreement is $84.7\%$, and their greedy exact-sequence accuracies are $70.3\%$ and $45.7\%$. Semantic descriptions are selected on a separate $2{,}048$-example calibration panel and evaluated unchanged on a disjoint $2{,}048$-example test panel. On the latter, Q-model token accuracy is $84.7\%$ and source--Q agreement is $83.3\%$; median rich held-out F1 is $0.698$ for quanta, with $6/13$ at least $0.7$, and $0.719$ for writers conditional on their owner quantum, with $23/39$ at least $0.7$. On a separate $512$-example causal panel, joint deletion of the four messages lowers accuracy by 8.0 points and matched-message replacement lowers it by 2.5 points.

The audit finds clean tail, hundred, and preceding-token writers but no reusable
zero writer: the best single-writer held-out F1 is $0.12$--$0.25$ across place
values. This motivates the zero-gap and short-tail diagnostics, but does not
show that the source Transformer lacks a distributed zero-handling mechanism.

\subsection{Execution Traces and Ordering Diagnostics}
\spar{Exact $346$ trace}
The source and Q-model both greedily generate \emph{three, hundred, forty, six, EOS}. Tables~\ref{tab:qmodel_346_summary}--\ref{tab:qmodel_346_writers} report every active quantum and writer in the trace. Activity is the magnitude of a gated scalar-writer contribution, not a probability or causal-effect size; F1 is measured on the held-out semantic test. The descriptions follow the coherent progression sequence start and value read $\rightarrow$ hundreds structure $\rightarrow$ after-\emph{hundred} and tail processing $\rightarrow$ unit value $\rightarrow$ termination. Across the 26 active-writer occurrences, mean owner-conditional held-out F1 is $0.802$ and 23 occurrences come from writers with F1 at least $0.7$.

\begin{table*}[t]
    \centering
    \caption{\small \textbf{Event summary for the $346$ greedy trace.} Counts and activity are exact for the displayed execution; writer F1 statistics use the frozen owner-conditional descriptions.}
    \label{tab:qmodel_346_summary}
    \small
    \setlength{\tabcolsep}{5pt}
    \begin{tabular}{lrrrrl}
        \hline
        Output & Active & Active & Total & Mean/min  & Event interpretation \\
               & Qs     & writers & activity &      writer F1   & \\
        \hline
        three & 4 & 6 & 39.600 & 0.812 / 0.741 & sequence start, unit lexeme, value read \\
        hundred & 3 & 4 & 12.088 & 0.696 / 0.584 & hundreds and low-chunk structure \\
        forty & 5 & 6 & 33.803 & 0.847 / 0.771 & after-hundred, tail, value read/write \\
        six & 4 & 7 & 20.187 & 0.821 / 0.650 & unit value and tail completion \\
        EOS & 7 & 3 & 6.095 & 0.794 / 0.684 & termination and completed tail \\
        \hline
    \end{tabular}
\end{table*}

\begin{table*}[t]
    \centering
    \caption{\small \textbf{All active quanta for $346$.} Each description is selected on calibration events and its F1 is evaluated unchanged on the disjoint semantic test.}
    \label{tab:qmodel_346_quanta}
    \scriptsize
    \setlength{\tabcolsep}{5pt}
    \begin{tabular}{llrl}
        \hline
        Output & Quantum & Held-out F1 & Selected description \\
        \hline
        three & \texttt{L0.Q3} & 0.615 & unit lexeme or 5-digit input \\
        three & \texttt{L1.Q2} & 0.705 & after sequence start or 3-digit input \\
        three & \texttt{L1.Q3} & 0.544 & chunk has hundreds \\
        three & \texttt{L2.Q2} & 0.662 & unit lexeme \\
        hundred & \texttt{L0.Q0} & 0.698 & chunk has tail \\
        hundred & \texttt{L1.Q0} & 0.760 & one-hundred rule or EOS decision \\
        hundred & \texttt{L2.Q0} & 0.743 & chunk has tail \\
        forty & \texttt{L0.Q1} & 0.566 & emit value or 5-digit input \\
        forty & \texttt{L1.Q1} & 0.713 & emit value and chunk has hundreds \\
        forty & \texttt{L1.Q3} & 0.544 & chunk has hundreds \\
        forty & \texttt{L2.Q1} & 0.792 & emit value and chunk has hundreds \\
        forty & \texttt{L2.Q2} & 0.662 & unit lexeme \\
        six & \texttt{L0.Q3} & 0.615 & unit lexeme or 5-digit input \\
        six & \texttt{L1.Q2} & 0.705 & after sequence start or 3-digit input \\
        six & \texttt{L2.Q1} & 0.792 & emit value and chunk has hundreds \\
        six & \texttt{L2.Q2} & 0.662 & unit lexeme \\
        EOS & \texttt{L0.Q0} & 0.698 & chunk has tail \\
        EOS & \texttt{L0.Q2} & 0.579 & EOS decision or 6-digit input \\
        EOS & \texttt{L1.Q0} & 0.760 & one-hundred rule or EOS decision \\
        EOS & \texttt{L1.Q2} & 0.705 & after sequence start or 3-digit input \\
        EOS & \texttt{L1.Q3} & 0.544 & chunk has hundreds \\
        EOS & \texttt{L2.Q0} & 0.743 & chunk has tail \\
        EOS & \texttt{L2.Q3} & 0.711 & EOS decision \\
        \hline
    \end{tabular}
\end{table*}

\begin{table*}[t]
    \centering
    \caption{\small \textbf{All active writers for $346$.} Activity is the gated contribution magnitude in this execution. F1 is held out and conditioned only on events where the writer's owner quantum is active.}
    \label{tab:qmodel_346_writers}
    \scriptsize
    \setlength{\tabcolsep}{5pt}
    \begin{tabular}{llrrl}
        \hline
        Output & Writer & Activity & Conditional F1 & Selected description \\
        \hline
        three & \texttt{L0.Q3.W10} & 0.218 & 0.741 & after sequence start \\
        three & \texttt{L1.Q2.W8} & 1.802 & 0.824 & after sequence start or hundreds \\
        three & \texttt{L1.Q3.W9} & 0.359 & 0.783 & unit form or after sequence start \\
        three & \texttt{L2.Q2.W6} & 14.219 & 0.882 & value read \\
        three & \texttt{L2.Q2.W7} & 10.736 & 0.869 & value read \\
        three & \texttt{L2.Q2.W8} & 12.266 & 0.771 & emit value \\
        hundred & \texttt{L0.Q0.W2} & 1.395 & 0.584 & after three or after four \\
        hundred & \texttt{L1.Q0.W0} & 1.772 & 0.728 & low chunk \\
        hundred & \texttt{L2.Q0.W1} & 6.231 & 0.753 & chunk has hundreds \\
        hundred & \texttt{L2.Q0.W2} & 2.689 & 0.719 & low chunk or 4-digit input \\
        forty & \texttt{L1.Q1.W3} & 0.106 & 0.828 & after hundred \\
        forty & \texttt{L2.Q1.W3} & 6.971 & 0.847 & chunk has tail \\
        forty & \texttt{L2.Q1.W5} & 9.314 & 0.883 & chunk has tail \\
        forty & \texttt{L2.Q2.W6} & 5.467 & 0.882 & value read \\
        forty & \texttt{L2.Q2.W7} & 5.906 & 0.869 & value read \\
        forty & \texttt{L2.Q2.W8} & 6.039 & 0.771 & emit value \\
        six & \texttt{L0.Q3.W9} & 1.013 & 0.650 & chunk has hundreds \\
        six & \texttt{L2.Q1.W3} & 4.822 & 0.847 & chunk has tail \\
        six & \texttt{L2.Q1.W4} & 1.294 & 0.841 & chunk has tail \\
        six & \texttt{L2.Q1.W5} & 1.933 & 0.883 & chunk has tail \\
        six & \texttt{L2.Q2.W6} & 2.247 & 0.882 & value read \\
        six & \texttt{L2.Q2.W7} & 3.845 & 0.869 & value read \\
        six & \texttt{L2.Q2.W8} & 5.033 & 0.771 & emit value \\
        EOS & \texttt{L0.Q2.W7} & 1.320 & 0.710 & one-hundred rule or 3-digit input \\
        EOS & \texttt{L1.Q3.W10} & 2.151 & 0.684 & chunk has tail \\
        EOS & \texttt{L2.Q3.W11} & 2.624 & 0.989 & chunk has tail \\
        \hline
    \end{tabular}
\end{table*}

\spar{A contrasting execution}
Figure~\ref{fig:qmodel_800_trace} shows a second saved greedy trace from the same 13-quantum, 39-writer Q-model. On input $800$, both the source and Q-model emit \emph{eight, hundred, EOS}, so the model terminates without a tens or unit word. Four, three, and five gates are active on the three events, but only four, four, and three writers emit. In particular, an active gate can contribute no writer message; gate activity alone is not a measure of computation. This example was selected after fitting from a ranked probe set and is an illustration, not a held-out frequency estimate or evidence that the source uses these writer identities.

\begin{figure}[t]
    \centering
    \includegraphics[width=0.60\textwidth]{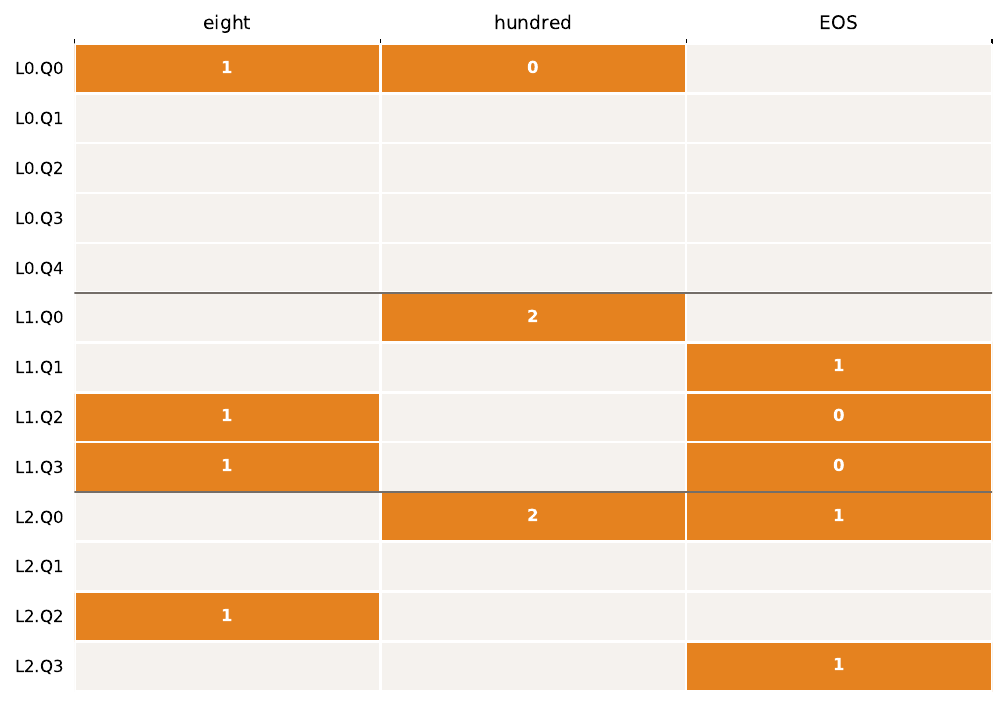}
    \caption{\small \textbf{A second Q-model execution, $800$.} Orange cells mark active quantum gates; the number inside is the count of active scalar writers in that gate's three-writer bank. Blank cells are inactive gates. Each column is one greedily emitted token. The source and Q-model both match the target at all three steps.}
    \label{fig:qmodel_800_trace}
\end{figure}

\spar{Graph and ordering diagnostics}
The fitted message graph is not a unique source circuit. The selected Q-model has four hard channels; graph stability across matched global-budget refits remains to be established. The frequency/complexity inset instead uses a distinct $1\%$ diagnostic factorization with $22$ factors to resolve the curve. For each factor, the source Transformer’s MLP-block updates are averaged using its normalized temporal-priority weights. The displayed line smooths each layer over $375$ source steps before averaging layers. Repeating the factorization at $1\%$, $3\%$, and $5\%$ gives pooled rank correlations of $0.678$, $0.805$, and $0.868$ between acquisition time and packet complexity, and $-0.774$, $-0.782$, and $-0.735$ between acquisition time and demand divided by packet complexity. All nine layer-by-threshold comparisons have the predicted signs. Because the refits share one trajectory and their factors are not independent, this is a threshold-robust consistency check, not an independent measurement of $\kappa_q(I)$.

\subsection{Ground-Truth Number-Naming Q-Program}
\label{app:qprogram_code}

The program below shows the core prediction routine and representative quantum
annotations. The omitted wrappers have the same form: each assigns a stable
identity to one deterministic operation such as selecting a chunk, reading a
digit, or choosing a lexical form.

{\scriptsize
\begin{verbatim}
@registry.quantum("TAIL_KIND", output_cardinality=6)
def q_tail_kind(tens_digit, unit_digit):
    return tail_kind(tens_digit, unit_digit)

@registry.quantum("BOUNDARY_ACTION", output_cardinality=3)
def q_boundary_action(group, position, content_length):
    return boundary_action(group, position, content_length)

def predict(state):
    high_length = q_high_content_length(state)
    group = q_control_group(state, high_length)
    chunk = q_select_chunk(state, group)
    position = q_group_progress(state, group, high_length)
    hundreds = q_hundreds_component(chunk)
    tens = q_tail_tens_digit(chunk)
    unit = q_tail_unit_digit(chunk)
    kind = q_tail_kind(tens, unit)
    length = q_chunk_length(hundreds, kind)
    boundary = q_boundary_action(group, position, length)

    with when(boundary, BoundaryAction.THOUSAND) as active:
        if active:
            return q_emit_thousand(boundary)
    with when(boundary, BoundaryAction.EOS) as active:
        if active:
            return q_emit_eos(boundary)
    with when(boundary, BoundaryAction.CONTENT) as active:
        if active:
            slot = q_content_slot(position, hundreds)
            with when(slot, ContentSlot.HUNDRED_WORD) as word:
                if word:
                    return q_emit_hundred(slot)
            form = q_lexical_form(slot, kind)
            with when(form, LexicalForm.TEN) as ten:
                if ten:
                    return q_emit_ten(form)
            value = q_lexical_value(
                slot, hundreds, tens, unit)
            with when(form, LexicalForm.UNIT) as unit_form:
                if unit_form:
                    return q_lex_unit(value)
            with when(form, LexicalForm.TEEN) as teen:
                if teen:
                    return q_lex_teen(value)
            with when(form, LexicalForm.TENS) as tens_form:
                if tens_form:
                    return q_lex_tens(value)
    return q_emit_eos(boundary)
\end{verbatim}
}

\spar{Compilation and guidance}
The program marks typed Python subroutines with stable quantum identities.
The compiler validates the registry and entry point, then executes a separate
audit domain twice to check determinism. Semantic values carry data-producer
identities, while explicit branch guards record control parents. The compiler
checks stable parent signatures, a data or control path from every active
quantum to the emitted token, and observed counterfactual effects of annotated
primitives. It unions the witnessed dependencies and takes their transitive
reduction. Training and evaluation events are then traced against this frozen
graph to provide active-node and ancestor-eligibility masks, semantic values,
and the single emitting quantum. These are labels for the declared program,
not labels inferred from the Transformer; unseen dependencies in a new input
domain require a separate coverage check.

The compiled graph fixes node order, direct Hasse edges, topological depth,
eligibility, and output leaves for the neural Q-model. In the factorized
Number-Naming program it contains $20$ quanta, $25$ direct edges, and $10$
levels. At each prediction event, the learned model reads causally visible
token memory and combines the initial residual with each unique ancestor
message once, including at graph joins. Local gate probabilities are
downward-closed under predicted routing by taking the minimum with parent
activity; the activity multiplier is detached from task-loss gradients, while
eligible gates receive balanced binary-cross-entropy supervision from the
compiled traces. The readout sums the initial residual and messages from
compiler-declared leaves; intermediate nodes influence it through descendants.
Semantic-value classifiers supervise selected message deltas during training
and are removed for evaluation. Reported generalization uses predicted routing;
oracle routing is a separate
diagnostic, so exact program execution does not imply exact neural execution.

The factorized compiled Q-model has 20 quanta over ten execution levels and
$53{,}728$ inference parameters, versus $46{,}284$ for the five-layer comparison
Transformer; its gates, messages, and readout remain learned, and the
optimization conditions of this sanity comparison are not matched. For
guidance, the frozen Q-model sums its executed messages at each depth to form
$Z_d$. A fixed semi-orthogonal projection maps the corresponding Transformer
block update into Q-space; the guidance objective in
Section~\ref{sec:number_naming} uses unit depth scales and alignment weight
one. The aligned and cross-entropy-only $10$-block
Transformers have $89{,}004$ deployment parameters and hold architecture,
data, schedule, initialization, and seed fixed; their alignment weights are one
and zero, respectively. The saved aligned Transformer records no Q dependency
at inference. Depth-shuffled and example-shuffled targets, together with
additional seeds, remain specificity and stability controls rather than
ingredients of the reported model.

\end{document}